\def\UrlSpecials{\do\~{\kern -.15em\lower .7ex\hbox{~}\kern .04em}} \catcode`~=13 
\newcommand{\nn}{\nonumber}
\newcommand{\bz}{\mathbf{z}}
\DeclareMathAlphabet{\mathbsf}{OT1}{cmss}{bx}{n}
\DeclareMathAlphabet{\mathssf}{OT1}{cmss}{m}{sl}% slanted sans serif
\DeclareSymbolFont{bsfletters}{OT1}{cmss}{bx}{n}  
\DeclareSymbolFont{ssfletters}{OT1}{cmss}{m}{n}
\DeclareMathSymbol{\bsfGamma}{0}{bsfletters}{'000}
\DeclareMathSymbol{\ssfGamma}{0}{ssfletters}{'000}
\DeclareMathSymbol{\bsfDelta}{0}{bsfletters}{'001}
\DeclareMathSymbol{\ssfDelta}{0}{ssfletters}{'001}
\DeclareMathSymbol{\bsfTheta}{0}{bsfletters}{'002}
\DeclareMathSymbol{\ssfTheta}{0}{ssfletters}{'002}
\DeclareMathSymbol{\bsfLambda}{0}{bsfletters}{'003}
\DeclareMathSymbol{\ssfLambda}{0}{ssfletters}{'003}
\DeclareMathSymbol{\bsfXi}{0}{bsfletters}{'004}
\DeclareMathSymbol{\ssfXi}{0}{ssfletters}{'004}
\DeclareMathSymbol{\bsfPi}{0}{bsfletters}{'005}
\DeclareMathSymbol{\ssfPi}{0}{ssfletters}{'005}
\DeclareMathSymbol{\bsfSigma}{0}{bsfletters}{'006}
\DeclareMathSymbol{\ssfSigma}{0}{ssfletters}{'006}
\DeclareMathSymbol{\bsfUpsilon}{0}{bsfletters}{'007}
\DeclareMathSymbol{\ssfUpsilon}{0}{ssfletters}{'007}
\DeclareMathSymbol{\bsfPhi}{0}{bsfletters}{'010}
\DeclareMathSymbol{\ssfPhi}{0}{ssfletters}{'010}
\DeclareMathSymbol{\bsfPsi}{0}{bsfletters}{'011}
\DeclareMathSymbol{\ssfPsi}{0}{ssfletters}{'011}
\DeclareMathSymbol{\bsfOmega}{0}{bsfletters}{'012}
\DeclareMathSymbol{\ssfOmega}{0}{ssfletters}{'012}
\newcommand{\tilbx}{\tilde{\bx}}
\newcommand{\bone}{\mathbf{1}}
\newtheorem{theorem}{Theorem} 
\newtheorem{lemma}[theorem]{Lemma}
\newcommand{\qednew}{\nobreak \ifvmode \relax \else
      \ifdim\lastskip<1.5em \hskip-\lastskip
      \hskip1.5em plus0em minus0.5em \fi \nobreak
      \vrule height0.75em width0.5em depth0.25em\fi}
\newcommand{\bx}{\mathbf{x}}
\newcommand{\by}{\mathbf{y}}
\newcommand{\calF}{\mathcal{F}}
\newcommand{\bbR}{\mathbb{R}} % Real Numbers
\newcommand{\bbZ}{\mathbb{Z}} % Integer
\newcommand{\bbE}{\mathbb{E}} % Expectation
\title{Generative Adversarial Nets:\\
	Can we generate a new dataset based on only one training set?}
\author{%
	Lan V. Truong\thanks{Use footnote for providing further information
		about author (webpage, alternative address)---\emph{not} for acknowledging
		funding agencies.} \\
	Department of Engineering\\
	University of Cambridge\\
	Cambridge, CB2 1PZ \\
	\texttt{lt407@cam.ac.uk} \\
	% examples of more authors
	% \And
	% Coauthor \\
	% Affiliation \\
	% Address \\
	% \texttt{email} \\
	% \AND
	% Coauthor \\
	% Affiliation \\
	% Address \\
	% \texttt{email} \\
	% \And
	% Coauthor \\
	% Affiliation \\
	% Address \\
	% \texttt{email} \\
	% \And
	% Coauthor \\
	% Affiliation \\
	% Address \\
	% \texttt{email} \\
}
\begin{document}
	\maketitle
	\begin{abstract}
		A generative adversarial network (GAN) is a class of machine learning frameworks designed by Goodfellow et al. in 2014. In the GAN framework, the generative model is pitted against an adversary: a discriminative model that learns to
		determine whether a sample is from the model distribution or the data distribution. GAN generates new samples from the same distribution as the training set. In this work, we aim to generate a new dataset that has a different distribution from
		the training set. In addition, the Jensen-Shannon divergence between the distributions of the generative and training datasets can be controlled by
		some target $\delta \in [0, 1]$. Our work is motivated by applications in generating new kinds of rices which have similar characteristics as a good rice. 
	\end{abstract}

	\section{INTRODUCTION} \label{sec:intro}
	Representation learning is a set of techniques that allows a system to automatically discover the representations from raw data needed for feature detection or classification from raw data. This replaces manual feature engineering and allows a machine to both learn the features and use them to perform a specific task. Feature learning can be either supervised or unsupervised. In supervised feature learning, features are learned using labeled input data. Examples include supervised neural networks, multilayer perceptron and (supervised) dictionary learning. In unsupervised feature learning, features are learned with unlabeled input data. Examples include dictionary learning, independent component analysis, autoencoders, matrix factorization and various forms of clustering.

	\subsection{Related Papers} 
	In the last few years, deep learning based generative models have gained more and more interest due to (and implying) some amazing improvements in the field. Relying on huge amount of data, well-designed networks architectures and smart training techniques, deep generative models have shown an incredible ability to produce highly realistic pieces of content of various kind, such as images, texts and sounds. Among these deep generative models, two major families stand out and deserve a special attention: Generative Adversarial Networks (GANs) \citep{Goodfellow2014GenerativeAN} and Variational Autoencoders (VAEs) \citep{Kingma2014AutoEncodingVB}.
	
	A variational autoencoder can be defined as being an autoencoder \citep{Kramer1991NonlinearPC} whose training is regularised to avoid overfitting and ensure that the latent space has good properties that enable generative process. Tolstikhin et al. proposed a Wasserstein Autoencoder (WAE), which minimizes a penalized form of the Wasserstein distance between the model distribution and the generative distribution \citep{TolBouGelSch18}. WAE shares many of the properties of VAEs such as stable training, encoder-decoder architecture, nice latent manifold structure while generating samples of better quality, as measured by the FID score.
	
	A generative adversarial network (GAN) is a class of machine learning frameworks designed by Goodfellow  et al. in  2014 \citep{Goodfellow2014GenerativeAN}.
	In GAN, the generative model learns to map from a latent space to a data distribution of interest, while the discriminative model distinguishes candidates produced by the generator from the true data distribution. The generative network's training objective is to increase the error rate of the discriminative network. Generative adversarial networks have applications in many fields such as fashion, art and advertising, science, video games, and audio synthesis. There is a veritable zoo of GAN variants. Conditional GANs \citep{Goodfellow2014GenerativeAN} are similar to standard GANs except they allow the model to conditionally generate samples based on additional information. For example, if we want to generate a cat face given a dog picture, we could use a conditional GAN. The GAN game is a general framework and can be run with any reasonable parametrization of the generator $G$ and discriminator $D$. In the original paper, the authors demonstrated it using multilayer perceptron networks and convolutional neural networks. Many alternative architectures have been tried such as Deep convolutional GAN \citep{Radford2016UnsupervisedRL}, Self-attention GAN \citep{Jiang2021TransGANTP}, Flow-GAN \citep{Grover2018FlowGANCM}. 
	
	\subsection{Motivations}
	There were some new variants of GAN which allow the use of multiple data distributions and the generated ones such as the conditional GAN.  However, these new variants of GAN require least two different training sets to generate a new one. In many applications in practice, we would like to generate a new dataset which have the same characteristic as a reference one. In this work, we aim to develop a new variant of GAN which allows to perform this task. Our work is motivated by applications in generating new kinds of rices which have similar characteristics as a good rice. 
	
	More specifically, assume that we have $L$ datasets with unknown distribution $p_1,p_2,\cdots,p_L$ for some $L\geq 1$. We aim to generate a new dataset which has a different distribution from the training datasets. In addition, the Jensen-Shannon divergence between the distribution of the generative dataset and a mixture data distribution can be controlled, i.e.  $ \rm{JSD}(\sum_{l=1}^L \alpha_l p_l,p_g)\leq \delta$ for some given non-negative tuple $(\alpha_1,\alpha_2,\cdots,\alpha_L)$ satisfying $\sum_{i=1}^L \alpha_i=1$ and $\delta \in [0,1]$.  For $L=1$, our algorithm generates a new dataset such that the Jensen-Shannon divergence between the distributions of the generative and the training data is upper bounded by some target $\delta \in [0,1]$. 
	
	This additional ``controllable property" is very important in many applications. For example, we sometimes need to generate a new cat gender (images) which owns most properties as an old gender of cats. In many other applications, we may increase the number of new generated images by lessening the distance requirement between the distributions of  data and generated ones compared with GAN or conditional GANs. 
	\subsection{Contributions}
	Our main contributions include: 
	\begin{itemize}
		\item We develop a new technique which allows to control the total variation between the distribution of the random vectors $\bx$ and $\by$ where $\by=\bx+\bz$ and $\bz$ is a sparse random vector with fixed distribution. 
		\item We propose a mechanism to which allows to loosen Jensen-Shannon divergence between the distribution of the generated distribution and the data distribution in the Goodfellow et al's model \citep{Goodfellow2014GenerativeAN}.
		\item We extend this new model to allows the use of multiple data distributions as in the conditional GAN.
		\item We illustrate our ideas on datasets Cfar10 and Cfar100, and generate new datasets based on only one dataset or a mixture of these two datasets for different values of $\delta$.
	\end{itemize}  
	\subsection{Notations} Consider a measurable $(\Omega,\calF)$ and probability measures $P$ and $Q$ defined on $(\Omega,\calF)$. The total variation distance between $P$ and $Q$ is defined as
	\begin{align}
	\delta(P,Q)=\sup_{A \in \calF}\big|P(A)-Q(A)|. 
	\end{align}
	If $\Omega \subset \bbR^d$ for some $d \in \bbZ_+$, and $p,q$ are corresponding probability functions w.r.t. the Lebesgue (or counting) measure in $\bbR^d$, then the total variation can be represented as
	\begin{align}
	\rm{TV}(P,Q)=\frac{1}{2}\int_{\Omega} \big|p(x)-q(x)\big|dx. 
	\end{align}
	The Jensen-Shannon divergence can be defined as
	\begin{align}
	\rm{JSD}(P,Q)=\frac{1}{2}D\big(P\|M\big)+\frac{1}{2}D\big(Q\|M\big)
	\end{align} where
	\begin{align}
	M=\frac{1}{2}\big(P+Q\big). 
	\end{align}
	\section{THEORETICAL RESULTS}\label{sec:setting}
	The original GAN is defined as a game where the generative model $G$ is pitted against an adversary: a discriminative model $D$ that learns to determine whether a sample is from model distribution or the data one. In \citep{Goodfellow2014GenerativeAN}, $D$ and $G$ play the following two player minimax game with value function $V(G,D)$:
	\begin{align}
	&\min_G \max_D V(D,G)=\bbE_{\bx \sim p_{\rm{data}}(\bx)}[\log D(\bx)] \nn\\
	&\qquad \qquad + \bbE_{\bz \sim p_{\bz}(\bz)}[\log(1-D(G(\bz)))] \label{G2}.
	\end{align}
	Then, given a fixed $G$, the optimal discriminator $D$ \cite[Prep.~1]{Goodfellow2014GenerativeAN} is
	\begin{align}
	D_G^*(\bx)=\frac{p_{\rm{data}}(\bx)}{p_{\rm{data}}(\bx)+p_g(\bx)}.
	\end{align}
	Let
	\begin{align}
	C(G)=\max_{D}V(G,D).
	\end{align}
	Then, the minimax game in Eq.~\eqref{G2} can be reformulated as:
	\begin{align}
	C(G)&=\max_{D}V(G,D)\\
	&=\bbE_{\bx \sim p_{\rm{data}}}\bigg[\log \frac{ p_{\rm{data}}(\bx)}{p_{\rm{data}}(\bx)+p_g(\bx)}\bigg]\nn\\
	&\qquad+ \bbE_{\bx \sim p_g}\bigg[\log \frac{p_g(\bx)}{p_{\rm{data}}(\bx)+ p_g(\bx)}\bigg].
	\end{align}
	Then, the following result was proved.
	\begin{theorem}\citep{Goodfellow2014GenerativeAN} \label{thm0} The global minimum of the virtual training criterion $C(G)$ is achieved if and only if $p_g=p_{\rm{data}}$. At that point $C(G)$ achieves the value $-\log 4$. More specifically, $C(G)=-\log 4+2 \rm{JSD}(p_{\rm{data}}\|p_g)$ where $\rm{JSD}(p_{\rm{data}}\|p_g)$ is the Jensen-Shannon divergence between the data distribution and the generative one.
	\end{theorem}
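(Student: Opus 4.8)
The plan is to reduce $C(G)$ to a constant plus a Jensen--Shannon divergence by a short algebraic manipulation of the two logarithmic ratios, and then to read off the minimizer and the minimal value from the non-negativity of divergence. Starting from the expression for $C(G)$ derived just above, I would first introduce the mixture $M=\frac{1}{2}(p_{\rm{data}}+p_g)$ and rewrite each integrand so that its denominator becomes $2M$; explicitly, $\frac{p_{\rm{data}}(\bx)}{p_{\rm{data}}(\bx)+p_g(\bx)}=\frac{1}{2}\cdot\frac{p_{\rm{data}}(\bx)}{M(\bx)}$, and similarly for the $p_g$ term. Passing the factor $\frac{1}{2}$ through the logarithm peels off a constant $-\log 2$ from each of the two expectations.

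After taking expectations, the surviving pieces are precisely Kullback--Leibler divergences: $\bbE_{\bx\sim p_{\rm{data}}}\big[\log\frac{p_{\rm{data}}(\bx)}{M(\bx)}\big]=D(p_{\rm{data}}\|M)$ and $\bbE_{\bx\sim p_g}\big[\log\frac{p_g(\bx)}{M(\bx)}\big]=D(p_g\|M)$. Collecting the two constants yields $-\log 2-\log 2=-\log 4$, so I obtain $C(G)=-\log 4+D(p_{\rm{data}}\|M)+D(p_g\|M)$. By the definition of the Jensen--Shannon divergence recalled in the Notations section, the sum of these two KL terms equals $2\,\rm{JSD}(p_{\rm{data}}\|p_g)$, which is exactly the claimed closed form $C(G)=-\log 4+2\,\rm{JSD}(p_{\rm{data}}\|p_g)$.

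To locate the global minimum I would invoke the standard fact that a KL divergence is non-negative and vanishes if and only if its two arguments coincide almost everywhere. Consequently $\rm{JSD}(p_{\rm{data}}\|p_g)\ge 0$ and hence $C(G)\ge -\log 4$, with equality forcing both $D(p_{\rm{data}}\|M)=0$ and $D(p_g\|M)=0$, i.e.\ $p_{\rm{data}}=M=p_g$. In particular the value $-\log 4$ is attained exactly when $p_g=p_{\rm{data}}$.

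The one step that needs genuine care is the ``only if'' direction of the characterization: I must justify that equality in the divergence bound occurs solely when the distributions agree. This rests on the strictness of Jensen's inequality applied to the strictly convex function $-\log$, which gives $D(P\|Q)=0\iff P=Q$ (Gibbs' inequality). The remaining manipulations are routine bookkeeping with the logarithm and the definition of $M$.
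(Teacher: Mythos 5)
Your proof is correct and is exactly the standard argument of Goodfellow et al.\ (2014), which is what this paper relies on: the theorem is stated here with a citation and no independent proof, and the cited proof proceeds precisely as you do, peeling off $-\log 2$ from each term after rewriting the ratios over the mixture $M=\tfrac{1}{2}(p_{\rm{data}}+p_g)$, identifying the two KL divergences as $2\,\rm{JSD}(p_{\rm{data}}\|p_g)$, and concluding via non-negativity of KL with equality iff the arguments coincide. No gaps; your explicit handling of the ``only if'' direction through Gibbs' inequality is the right justification.
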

	Now, one interesting question is how to constraint the total variations between $p_{\rm{data}}$ and $p_g$ such that
	\begin{align}
	\rm{TV}(p_{\rm{data}}\|p_g)\leq \delta \label{mage}
	\end{align} for some given $\delta \in [0,1]$. At the first sight, a change in the loss function may help. However, finding a loss function for this target looks very challenging since the learning algorithm does not known $p_{\rm{data}}$ and $p_g$. Fortunately, the following trick can help us to satisfy the constraint in \eqref{mage} without much effort. We can achieve this target by adding a random noise vector to the training data to form a new training set $\bx_i'=\bx_i+\bz_i$ for all $i \in [n]$ with distribution $p'_{\rm{data}}$, where $\bz_i \sim p_Z$. Then, the following result can be proved.
	\begin{theorem} \label{main} Let $p_{\rm{data}}$ be a distribution on $\bbR^d$. For any $\delta \in [0,1]$, there exists a distribution $p_Z$ in $\bbR^d$ such that
		\begin{align}
		\rm{TV}(p_{\rm{data}}\|p_g)\leq \delta \label{mag}.
		\end{align}
		More specifically, the class of distributions $p_Z(z)=(1-\delta) \delta(z)+\gamma g(z)$ for any distribution $g$ in $\bbR^d$ satisfies \eqref{mag}.
	\end{theorem}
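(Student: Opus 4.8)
The plan is to apply Theorem~\ref{thm0} not to the raw data but to the noise-augmented training set. Because the GAN is trained on the perturbed samples $\bx_i'=\bx_i+\bz_i$ with $\bz_i\sim p_Z$, the effective data distribution seen by the discriminator is the convolution $p_{\rm{data}}'=p_{\rm{data}}*p_Z$. Theorem~\ref{thm0} then forces the global optimum to satisfy $p_g=p_{\rm{data}}'$, so the whole problem reduces to bounding $\rm{TV}(p_{\rm{data}}\|p_{\rm{data}}*p_Z)$. This reframing is the conceptual heart of the argument, and it is the step I expect to need the most care: one must be explicit that injecting additive noise shifts the target distribution, so that Goodfellow's optimality result pins $p_g$ to the \emph{convolved} law rather than to $p_{\rm{data}}$ itself.

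Next I would substitute the proposed noise law $p_Z(z)=(1-\delta)\delta(z)+\gamma g(z)$, taking $\gamma=\delta$ so that $p_Z$ integrates to one, and use linearity of convolution together with the identity $p_{\rm{data}}*\delta=p_{\rm{data}}$. This gives
\begin{align}
p_g=p_{\rm{data}}*p_Z=(1-\delta)\,p_{\rm{data}}+\delta\,(p_{\rm{data}}*g).
\end{align}
Setting $h:=p_{\rm{data}}*g$, which is again a valid density since the convolution of two densities is a density, the difference collapses to $p_{\rm{data}}-p_g=\delta\,(p_{\rm{data}}-h)$.

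The concluding step is then a one-line total-variation estimate:
\begin{align}
\rm{TV}(p_{\rm{data}}\|p_g)=\frac{1}{2}\int_{\bbR^d}\big|p_{\rm{data}}(x)-p_g(x)\big|\,dx=\delta\,\rm{TV}(p_{\rm{data}}\|h)\leq \delta,
\end{align}
where the final inequality uses that the total variation between any two probability measures is at most $1$. Since $g$ was arbitrary, the entire family $p_Z(z)=(1-\delta)\delta(z)+\delta g(z)$ satisfies the claimed bound, which establishes the theorem. The only routine care needed beyond the reduction is justifying the convolution identities (in particular $p_{\rm{data}}*\delta=p_{\rm{data}}$) and the integrability needed to pull the scalar $\delta$ out of the absolute value in the TV integral.
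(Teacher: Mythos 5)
Your proposal is correct and takes essentially the same route as the paper: train the GAN on the noise-perturbed samples so that Theorem~\ref{thm0} pins $p_g$ to the convolution $p_{\mathrm{data}}*p_Z$, and then bound $\mathrm{TV}(p_{\mathrm{data}}, p_{\mathrm{data}}*p_Z)$ for the mixture noise $(1-\delta)\delta(z)+\delta g(z)$, which is exactly the content of the paper's Lemma~\ref{lem:a}. The only (minor) difference is in the last step, where you factor out $\delta$ exactly and invoke $\mathrm{TV}\le 1$ — giving the identity $\mathrm{TV}(p_{\mathrm{data}},p_g)=\delta\,\mathrm{TV}(p_{\mathrm{data}},p_{\mathrm{data}}*g)$, slightly sharper than the paper's triangle-inequality estimate — and you correctly observe that normalization forces $\gamma=\delta$ in the stated noise law.
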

	Theorem \ref{main} gives us a freedom to choose the distribution $g$ on $\bbR^d$ to generate new samples . In other words, the new generated samples are functions of $g$ and of distribution $p_g$ such that \eqref{mag} holds.   
	
	To show Theorem \ref{main}, we first show the following lemma, whose proof can be found in Appendix \ref{lem:aproof}.
	\begin{lemma} \label{lem:a}
		Let $X$ be a random variable in $\bbR^d$. First, we show that there exists a distribution $Z \in \bbR^d$ such that $Y=X+Z$ satisfies
		\begin{align}
		\rm{TV}(p_X,p_Y) \leq \gamma
		\end{align} for any distribution of $X$ and $\gamma \in \bbR_{\geq 0}$.
	\end{lemma}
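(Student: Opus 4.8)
The plan is to construct $Z$ explicitly as the two-point mixture already anticipated in the statement of Theorem~\ref{main}: let $Z=0$ with probability $1-\gamma$ and, independently of $X$, let $Z$ be drawn from an arbitrary density $g$ on $\bbR^d$ with the remaining probability $\gamma$, i.e.\ $p_Z=(1-\gamma)\delta+\gamma g$. Before analysing this choice I would first dispose of the trivial regime $\gamma\geq 1$: since the total variation distance is always at most $1$, the choice $Z=0$ (so that $Y=X$) already yields $\rm{TV}(p_X,p_Y)=0\leq\gamma$, and nothing remains to prove. Hence I may assume $\gamma\in[0,1)$, for which the mixing weights $1-\gamma$ and $\gamma$ are genuine probabilities.

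The key step is to identify the density of $Y=X+Z$ for this mixture. Conditioning on the two branches of $Z$, and using that the branch $Z=0$ leaves $X$ unchanged while the branch $Z\sim g$ convolves $p_X$ with $g$, the law of total probability for densities gives
\begin{align}
p_Y=(1-\gamma)\,p_X+\gamma\,(p_X\ast g),
\end{align}
where $\ast$ denotes convolution. Writing $p_{X+W}:=p_X\ast g$ for an independent $W\sim g$, the decisive algebraic observation is that $p_X$ cancels against the first term of $p_Y$ up to the factor $\gamma$:
\begin{align}
p_X-p_Y=\gamma\,(p_X-p_{X+W}).
\end{align}

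From here the bound follows immediately by taking absolute values and integrating against the integral representation of total variation from the Notations section:
\begin{align}
\rm{TV}(p_X,p_Y)=\frac{1}{2}\int_{\bbR^d}\big|p_X(x)-p_Y(x)\big|\,dx=\gamma\cdot\frac{1}{2}\int_{\bbR^d}\big|p_X(x)-p_{X+W}(x)\big|\,dx=\gamma\,\rm{TV}(p_X,p_{X+W}),
\end{align}
and since any total variation distance is at most $1$ we conclude $\rm{TV}(p_X,p_Y)\leq\gamma$, for every distribution of $X$, as required.

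I do not expect a serious obstacle: the argument is in essence the linearity of total variation under mixing combined with the universal bound $\rm{TV}\le 1$, so the whole lemma collapses to the one-line cancellation identity above. The only point warranting a line of care is the measure-theoretic meaning of the Dirac component $(1-\gamma)\delta$ in $p_Z$; one should note that, because $X$ is assumed to admit a density, the atom of $Z$ at the origin does not induce an atom in $Y=X+Z$, so that $p_Y$ is a bona fide density and the integral formula for $\rm{TV}$ applies verbatim. This is automatic once the conditioning is written out, and it is the only place where the hypothesis that $X$ has a density is used.
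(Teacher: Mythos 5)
Your proof is correct and follows essentially the same route as the paper's: the identical spike-and-slab construction $p_Z=(1-\gamma)\delta+\gamma g$, followed by the observation that the Dirac component reproduces $p_X$ so that only the $\gamma$-weighted convolution term contributes to the distance. The only differences are minor refinements in your favor: you extract the exact identity $p_X-p_Y=\gamma\,(p_X-p_X\ast g)$ and then invoke $\mathrm{TV}\le 1$, whereas the paper applies the triangle inequality inside the integral to reach the same bound $\gamma$; you also explicitly dispose of the regime $\gamma\ge 1$, where the mixture weights would otherwise fail to be probabilities, a case the paper leaves implicit.
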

	Now, let's return to prove Theorem \ref{main}.
	\begin{proof}[Proof of Theorem \ref{main}] By Lemma \ref{lem:a}, there exists a distribution $P_Z$ on $\bbR^d$ such that
		\begin{align}
		\rm{TV}(p_{\rm{data}},p_{\rm{data'}})\leq \delta \label{G3a}. 
		\end{align}
		Now, we assume that $D$ and $G$ play the following two player minimax game with value function $V'(G,D)$:
		\begin{align}
		&\min_G \max_D V'(D,G)=\bbE_{\bx \sim p'_{\rm{data}}(\bx)}[\log D(\bx)] \nn\\
		&\qquad \qquad + \bbE_{\bz \sim p_{\bz}(\bz)}[\log(1-D(G(\bz)))] \label{G3}.
		\end{align}
		Then, by Theorem \ref{thm0}, we have
		$C'(G):= \max_D V'(D,G)=-\log 4$ if $p'_{\rm{data}}=p_g$. Hence, from \eqref{G3a} and \eqref{G3}, the global minimum of the virtual training criterion $C(G')$ is achieved if and only if
		\begin{align}
		\rm{JSD}(p_{\rm{data}},p_g)\leq \delta.
		\end{align}
		This concludes our proof of Theorem \ref{main}.
	\end{proof}
	Then, we propose the following variant of the training algorithm \cite[Algorithm 1]{Goodfellow2014GenerativeAN} for this new setting. We would like to generalize the result of Theorem \ref{main} to $L$ data distributions $p_{\rm{data}}^{(1)}, p_{\rm{data}}^{(2)},\cdots, p_{\rm{data}}^{(L)}$ such that $\rm{TV}( p_{\rm{data}}^{(l)},p_g)\leq \delta_l$ for all $l \in [L]$ where $\delta_1,\delta_2,\cdots,\delta_L$ is a sequence of real numbers in $[0,1]$. 
	
	\begin{algorithm}
		\caption{A GAN Algorithm with Total Variant Constraints}\label{alg:cap}
		\begin{algorithmic}
			%\Require Number of training batches: $N_0=n/\texttt{batch size}$, number of epochs in each batch $k$
			\Ensure Training samples are of the same shape. $\alpha_1,\alpha_2,\cdots,\alpha_L \in [0,1]$ such that $\sum_{l=1}^L \alpha_l=1$
			\State Choose a training batch-size
			\item  Choose a number of epochs in each batch $k$
			\State Choose a noise distribution $p_Z$
			\State Choose $L$ arbitrary noise distributions $g_1,g_2,\cdots,g_L$ on the same signal space as training samples
			\State $N_0= \lceil  n/\texttt{batch-size }\rceil$
			\State $r \gets 1$
			\While{$r \leq N_0$}
			\For{\texttt{$k$ steps}}
			\State Sample a minibatch of $n$ samples $\{\bz^{(1)},\bz^{(2)},\cdots, \bz^{(n)}\}$ from noise prior $p_Z$
			\For{\texttt{each $l$ in $[L]$}}
			\State Sample a minibatch of $n$ examples $\{\bx_l^{(1)},\bx_l^{(2)},\cdots, \bx_l^{(n)}\}$ from data distribution $p_{l,\rm{data}}(\bx)$
			\State Sample a Bernoulli random variable $B$ with expectation $\gamma_l$
			\If{$B==1$}
			\State Sample a minibatch of $n$ samples $\{\bz_0^{(1)},\bz_0^{(2)},\cdots, \bz_0^{(n)}\}$ from noise prior $g_l$
			\For{\texttt{each $i$ in $[n]$}}
			$\tilbx_l^{(i)}\gets \bx_l^{(i)}+ \bz_0^{(i)}$ 
			\EndFor
			\ElsIf{$B==0$}
			\For{\texttt{each $i$ in $[n]$}}
			$\tilbx_l^{(i)}\gets \bx_l^{(i)}$
			\EndFor
			\EndIf
			\State Update the discriminator by ascending it stochastic gradient:
			$$
			\Delta_{\theta_d} \frac{1}{n}\sum_{i=1}^n\bigg[\sum_{l=1}^L \alpha_l \log D(\tilbx^{(i)})
			$$
			$$
			\qquad \qquad  +\log\bigg(1-D\bigg(G\big(\bz^{(i)}\big)\bigg)\bigg)\bigg]
			$$
			\EndFor
			\EndFor
			\State Sample a minibatch of $n$ samples $\{\bz^{(1)},\bz^{(2)},\cdots, \bz^{(n)}\}$ from noise prior $p_Z$
			\State Update the generator by descending its stochastic gradient:
			$$
			\Delta_{\theta_g}\frac{1}{n}\sum_{i=1}^n \log\bigg(1-D\bigg(G\big(\bz^{(i)}\big)\bigg)\bigg)
			$$
			\EndWhile
			
			The gradient-based updates can use any standard gradient-based learning rule. We used Adam in our experiments
		\end{algorithmic}
	\end{algorithm}
	Now, we prove the following result related to the convergence of Algorithm 1.
	\begin{theorem}\label{conv:thm} If $G$ and $\{D_l\}_{l=1}^L$ have enough capacity, and at each step of Algorithm 1, the discriminator $D_l$ is allowed to reach its optimum given $G$, and $p_g$ is updated so as to improve the criterion
		\begin{align}
		&\sum_{l=1}^L\alpha_l \bigg(\bbE_{\bx_l \sim p'_{l,\rm{data}}}\big[\log D^*(\bx_l)\big]\nn\\
		&\qquad + \bbE_{\bx \sim p_g}\big[\log(1-D^*(\bx))\big]\bigg)
		\end{align}
		then it holds that
		\begin{align}
		\limsup_{n \to \infty}\rm{TV}\big(p_{\rm{mix}},p_g\big)\leq \delta,
		\end{align} where
		\begin{align}
		p_{\rm{mix}}:=\sum_{l=1}^L \alpha_l p_{l,\rm{data}}.
		\end{align}
	\end{theorem}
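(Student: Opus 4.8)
The plan is to reduce the convergence claim to the single-distribution analysis of Theorem~\ref{thm0} applied to an auxiliary mixture, and then to convert the resulting Jensen--Shannon convergence into the stated total-variation bound. First I would introduce the noised mixture
\begin{align}
p'_{\rm{mix}} := \sum_{l=1}^L \alpha_l\, p'_{l,\rm{data}},
\end{align}
and observe that, because $\sum_{l=1}^L \alpha_l = 1$, the criterion being improved at each step collapses to
\begin{align}
\bbE_{\bx \sim p'_{\rm{mix}}}\big[\log D^*(\bx)\big] + \bbE_{\bx \sim p_g}\big[\log(1-D^*(\bx))\big],
\end{align}
which is precisely the value function of the original game in \eqref{G2} with the data distribution replaced by $p'_{\rm{mix}}$. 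Consequently the optimal discriminator is $D^*(\bx) = p'_{\rm{mix}}(\bx)/\big(p'_{\rm{mix}}(\bx)+p_g(\bx)\big)$, and by Theorem~\ref{thm0} the maximized criterion $C(G)$ equals $-\log 4 + 2\,\rm{JSD}(p'_{\rm{mix}}\|p_g)$.

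Next I would invoke the convexity argument of Goodfellow et al. The maximized criterion $C(G)$ is convex in $p_g$ and, by the identity above together with the fact that $\rm{JSD}(p'_{\rm{mix}}\|p_g)=0$ iff $p_g=p'_{\rm{mix}}$, it has a unique global minimum at $p_g=p'_{\rm{mix}}$. Since at each step the inner discriminator is driven to its optimum $D^*$ while $p_g$ is updated so as to improve the convex criterion, the generator distribution converges to the minimizer in the idealized limit indexed by $n$, i.e. $\rm{JSD}(p'_{\rm{mix}}\|p_g)\to 0$. A Pinsker-type inequality relating Jensen--Shannon divergence to total variation then yields $\rm{TV}(p'_{\rm{mix}},p_g)\to 0$.

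Finally I would transfer this back to the true mixture $p_{\rm{mix}}$ by the triangle inequality for total variation,
\begin{align}
\rm{TV}(p_{\rm{mix}},p_g) \leq \rm{TV}(p_{\rm{mix}},p'_{\rm{mix}}) + \rm{TV}(p'_{\rm{mix}},p_g).
\end{align}
The second term vanishes in the limit. For the first, convexity of total variation in each argument gives $\rm{TV}(p_{\rm{mix}},p'_{\rm{mix}}) \leq \sum_{l=1}^L \alpha_l\, \rm{TV}(p_{l,\rm{data}},p'_{l,\rm{data}})$, and each summand is bounded by the corresponding noise level via Lemma~\ref{lem:a} (equivalently the construction $p_Z(z)=(1-\delta)\delta(z)+\gamma g(z)$ of Theorem~\ref{main}), so the weighted sum is at most $\delta$ once the per-distribution noise parameters $\gamma_l$ are chosen with $\sum_{l=1}^L \alpha_l \gamma_l \leq \delta$ (e.g. $\gamma_l=\delta$ for all $l$). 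Taking $\limsup_{n\to\infty}$ then delivers $\limsup_{n\to\infty}\rm{TV}(p_{\rm{mix}},p_g)\leq \delta$.

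The main obstacle is the middle step: upgrading per-step improvement into genuine convergence $p_g \to p'_{\rm{mix}}$. This rests on the convexity of $C(G)$ in $p_g$ and on the fact that a subgradient of this supremum is furnished by the gradient evaluated at the maximizing discriminator $D^*$; making it rigorous requires the ``enough capacity'' idealization and care that the updates actually track subgradients of the convex envelope, which is exactly the point left informal in the original GAN convergence proof. A secondary technical point is the passage from $\rm{JSD}\to 0$ to $\rm{TV}\to 0$, which I would settle with the standard bound $\rm{TV}^2 \lesssim \rm{JSD}$.
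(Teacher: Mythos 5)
Your proposal is correct (modulo the idealization both you and the paper inherit from Goodfellow's Proposition 2) and follows the paper's skeleton for most of its length: you collapse the weighted criterion to the single-distribution game for the noised mixture $p'_{\mathrm{mix}}=\sum_{l=1}^L \alpha_l p'_{l,\mathrm{data}}$, identify the optimal discriminator, obtain $C(G)=-\log 4+2\,\mathrm{JSD}(p'_{\mathrm{mix}}\|p_g)$, invoke the convexity/capacity argument to get $\mathrm{JSD}(p'_{\mathrm{mix}},p_g)\to 0$, and bound $\mathrm{TV}(p_{\mathrm{mix}},p'_{\mathrm{mix}})\le\sum_{l=1}^L\alpha_l\,\mathrm{TV}(p_{l,\mathrm{data}},p'_{l,\mathrm{data}})\le\delta$ by joint convexity of total variation together with Lemma \ref{lem:a}. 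Where you genuinely diverge is the final assembly, and your route is the better one. The paper finishes in JSD space: it converts $\mathrm{TV}(p_{\mathrm{mix}},p'_{\mathrm{mix}})\le\delta$ into $\mathrm{JSD}(p_{\mathrm{mix}},p'_{\mathrm{mix}})\le\delta$ (via ``JSD is upper bounded by TV'') and applies the triangle inequality for $\sqrt{\mathrm{JSD}}$, ending with $\limsup_{n\to\infty}\mathrm{JSD}(p_{\mathrm{mix}},p_g)\le\delta$ --- which is not the statement of Theorem \ref{conv:thm}: the theorem asserts a bound on $\mathrm{TV}(p_{\mathrm{mix}},p_g)$, and from $\mathrm{JSD}\le\delta$ a Pinsker-type bound only yields $\mathrm{TV}\le\sqrt{2\delta}$, so the paper's own proof leaves a mismatch between what is proved and what is claimed. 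You instead convert the convergence leg $\mathrm{JSD}(p'_{\mathrm{mix}},p_g)\to 0$ into $\mathrm{TV}(p'_{\mathrm{mix}},p_g)\to 0$ (Pinsker used in the harmless direction, $\mathrm{TV}\le\sqrt{2\,\mathrm{JSD}}$, since the target is $0$) and finish with the ordinary triangle inequality for total variation, obtaining exactly $\limsup_{n\to\infty}\mathrm{TV}(p_{\mathrm{mix}},p_g)\le\delta$. So: same decomposition, same two key ingredients (the single-distribution reduction and the convexity-plus-noise bound), but your choice of which metric to run the triangle inequality in is what makes the stated TV conclusion actually come out, whereas the paper's choice delivers only the JSD analogue. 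Your explicit requirement $\sum_{l=1}^L\alpha_l\gamma_l\le\delta$ (e.g.\ $\gamma_l=\delta$ for all $l$) is also more careful than the paper, which tacitly sets every $\gamma_l=\delta$. Both proofs share the soft spot you flag honestly: upgrading ``$p_g$ is updated so as to improve the convex criterion'' into genuine convergence, and indeed attaching a precise meaning to the limit in $n$, is exactly as informal here as in the original GAN convergence argument.
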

	\begin{proof} The proof is based on \citep{Goodfellow2014GenerativeAN}. The training criterion for the discriminator $D$, given any generator $G$, is to maximize the quantity $V(G,D)$
		\begin{align}
		V(G,D)&=\sum_{l=1}^L\alpha_l \int_{\bx}  p'_{l,\rm{data}}(\bx)\log (D(\bx))d\bx\nn\\
		&\qquad  + \int_{\bx} p_{\rm{z}}(\bz)\log (1-D(G(\bz))) d\bz\\
		&=\int_{\bx}  \bigg(\sum_{l=1}^L \alpha_l p'_{l,\rm{data}}(\bx)\bigg) \log D(\bx)\nn\\
		&\qquad + p_g(\bx)\log(1-D(\bx))d\bx. 
		\end{align}
		For any $(a,b)\in \bbR^2 \setminus \{0,0\}$, the function $y\to a\log(y)+b\log(1-y)$ achieves its maximum in $[0,1]$ at $\frac{a}{a+b}$. 
		
		Hence, given $G$, the optimal discriminator $D$ is as follows:
		\begin{align}
		D_G^*(\bx)=\frac{\sum_{l=1}^L \alpha_l p'_{l,\rm{data}}(\bx)}{\sum_{l=1}^L \alpha_l p'_{l,\rm{data}}(\bx) + p_g(\bx)}.
		\end{align}
		
		Let $p'_{\rm{mix}}=\sum_{l=1}^L \alpha_l p'_{l,\rm{data}}$. Then, it follows that
		\begin{align}
		C(G)&=\max_D V(G,D)\\
		&=\bbE_{\bx \sim p'_{\rm{mix}}}[\log D_G^*(\bx)]+\bbE_{\bx \sim p_g}[\log (1-D_G^*(\bx))]\\
		&=-\log 4+ 2 \rm{JSD}\big(p'_{\rm{mix}}\|p_g\big), \label{A10} 
		\end{align} where \eqref{A10} follows from \cite[Eq.~(6)]{Goodfellow2014GenerativeAN}.	
		
		As each training step $l \in [L]$, it $D_l$ has enough capacity to reach its optimum given $G$, by \cite[Prep.~2]{Goodfellow2014GenerativeAN}, from \eqref{A10}, it holds that
		\begin{align}
		\rm{JSD}\big(p'_{\rm{mix}}, p_g\big)\to 0. \label{MAT0}
		\end{align}
		On the other hand, by the adding extra noise to the training set step, by Theorem \ref{main}, it holds that
		\begin{align}
		\rm{TV}\big(p_{l,\rm{data}},p'_{l,\rm{data}})\leq \delta, \qquad \forall l \in [L] \label{MAT1}.
		\end{align}
		This leads to
		\begin{align}
		&\rm{TV}\bigg(\sum_{l=1}^L \alpha_l p_{l,\rm{data}},\sum_{l=1}^L \alpha_l p'_{l,\rm{data}}\bigg) \nn\\
		&\qquad \leq \sum_{l=1}^L\alpha_l \rm{TV}(p_{l,\rm{data}}, p'_{l,\rm{data}}) \label{M1}\\
		&\qquad \leq \delta \bigg(\sum_{l=1}^L \alpha_l\bigg)\\
		&\qquad =\delta \label{M2},
		\end{align} where \eqref{M1} follows from the concavity of the total variation.
		
		From \eqref{M2}, we obtain
		\begin{align}
		\rm{JSD}\big(p_{\rm{mix}},p'_{\rm{mix}})\leq \delta \label{MAT2}
		\end{align} since Jensen-Shannon divergence is upper bounded by the total variation.
		
		Now, since the square root of
		the Jensen-Shannon divergence is a metric, it holds that
		\begin{align}
		&\sqrt{\rm{JSD}\big(p_{\rm{mix}},p_g)}\nn\\
		&\qquad \leq \sqrt{\rm{JSD}\big(p'_{\rm{mix}},p_g)}+\sqrt{\rm{JSD}\big(p_{\rm{mix}},p'_{\rm{mix}})}\\
		&\qquad \leq o(1)+\sqrt{\delta}\\
		&\qquad \to \sqrt{\delta}.
		\end{align}
		Hence, we have
		\begin{align}
		\limsup_{n\to \infty} \rm{JSD}\big(p_{\rm{mix}},p_g) \leq \delta.
		\end{align} 
		This concludes our proof of Theorem \ref{conv:thm}.
	\end{proof}
	\section{EXPERIMENTS}\label{sec:expe}
	\subsection{$g$ is the standard Gaussian noise vector}
	\subsubsection{Generate new datasets from an old one}
	%\begin{figure}[!htbp]
	%	\begin{center}
	%		\includegraphics[width=85mm, scale=1]{mnistround2.png}
	%		\caption{MNIST, $\delta=0.05$}
	%		\label{mnist}
	%	\end{center}
	%\end{figure}
	
	\begin{figure}[!htbp]
		\begin{center}
			\includegraphics[width=70mm, scale=1]{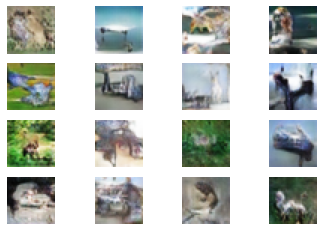}
			\caption{CFAR10, $\delta=0.1$}
			\label{cfar}
		\end{center}
	\end{figure}
	
	\begin{figure}[!htbp]
		\begin{center}
			\includegraphics[width=70mm, scale=1]{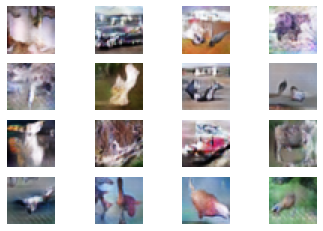}
			\caption{CFAR10, $\delta=0.5$}
			\label{cfar05}
		\end{center}
	\end{figure}

	%\begin{figure}[!htbp]
	%	\begin{center}
	%		\includegraphics[width=70mm, scale=1]{celeba01.png}
	%		\caption{CELEB-A, $\delta=0.1$}
	%		\label{celeba01}
	%	\end{center}
	%\end{figure}
	
	%\begin{figure}[!htbp]
	%	\begin{center}
	%		\includegraphics[width=70mm, scale=1]{celeba.png}
	%		\caption{CELEB-A, $\delta=0.5$}
	%		\label{celeba}
	%	\end{center}
	%\end{figure}
	Figures \ref{cfar}-\ref{cfar05} generate new images from CFAR10 for $\delta=0.1$ and $\delta=0.5$, respectively. If we reduce $\delta$, we will obtain images close to CFAR10. It clearly shows the effect of $\delta$ on the generating ability of the GAN. 
	\subsubsection{Generate new datasets from a mixture of old ones}
	\begin{figure}[!htbp]
		\begin{center}
			\includegraphics[width=70mm, scale=1]{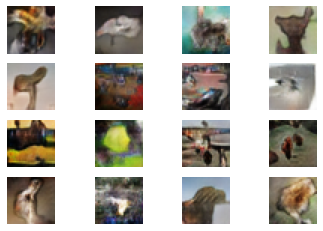}
			\caption{Mixture of CFAR10 and CFAR100, $\delta=0.1$}
			\label{default1}
		\end{center}
	\end{figure}
	
	%\begin{figure}[!htbp]
	%\begin{center}
	%\includegraphics[width=70mm, scale=1]{mixceleb.png}
	%\caption{Mixture of CELEB-A and Oxford-flowers102, $\delta=0.5$}
	%\label{default2}
	%\end{center}
	%\end{figure}

	As the conditional GAN, our algorithm can generate new datasets (i.e., all generated samples have the same distribution) based on mixture of two (or multiple) datasets.  However, since we allows $\delta$ arbitrarily chosen in $[0,1]$, we can generate much more datasets than the conditional GAN.  See Fig.~\ref{default1} for a new dataset which is generated from CFAR10 and CFAR100. %For example, ... shows that flower images from Oxford-flowers102 and celebrity images from Celeb-a can have the same distribution under a probability space (i.e., the same dataset).
	\subsection{$g$ is other distribution}
	In this experiment, we use Dirichlet distribution with $\alpha=\bone$. See Fig.~\ref{default10} for a new dataset which is generated from CFAR10. 
	\begin{figure}[!htbp]
		\begin{center}
			\includegraphics[width=70mm, scale=1]{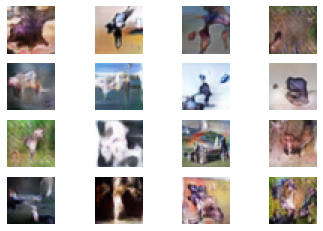}
			\caption{CFAR10, $\delta=0.5$}
			\label{default10}
		\end{center}
	\end{figure}
	\appendix
	\section{Proof of Lemma \ref{lem:a}}\label{lem:aproof}
	We choose a random variable $Z$ with the following distribution:
	\begin{align}
	p_Z(z)=(1-\gamma)\delta(z)+\gamma g(z),
	\end{align} where $g(z)$ is some probability distribution on $\bbR^d$.
	
	Observe that
	\begin{align}
	&2\rm{TV}(p_X,p_Y)\nn\\
	&\qquad =\int_y\bigg|\int_x p_X(x) p_Z(y-x)dx- p_X(y)\bigg|dy\\
	&\qquad =\int_y\bigg|\int_x p_X(x) \big[ (1-\gamma)  \delta(y-x)\nn\\
	&\qquad \qquad + \gamma g(y-x)\big]- p_X(y)\bigg|dy\\
	&\qquad \leq  \int_y\bigg|(1-\gamma) \int_x p_X(x) \delta(y-x)dx- p_X(y)\bigg|dy\nn\\
	&\qquad \qquad +\gamma \int_y \int_x p_X(x)g(y-x)dx dy \label{ab}\\
	&\qquad = \int_y\bigg|(1-\gamma) p_X(y)- p_X(y)\bigg|dy\nn\\
	&\qquad \qquad +\gamma \int_y \int_x p_X(x)g(y-x)dx dy\\
	&\qquad = \gamma \int_y p_Y(y)dy +\gamma \int_y \int_x p_X(x)g(y-x) dx dy\\
	&\qquad =2\gamma,
	\end{align} where \eqref{ab} follows from $|a+b|\leq |a|+|b|$.
	
	Hence, we have
	\begin{align}
	\rm{TV}(p_X,p_Y)\leq \gamma.
	\end{align}
	For $\gamma = 0$, we don't add anything and have $p_X=p_Y$ as expected.
	
	Allowing $\rm{TV}(p_X,p_Y)\not \to 0$ gives us a freedom to choose $g(z)$, the distribution of noise.
	\bibliographystyle{plainnat}
	\bibliography{isitbib}
\end{document}